\newtheorem{theorem}{Theorem}
\newtheorem{proof}{Proof}[section]
\title{Hidden Representation Clustering with Multi-Task Representation Learning towards Robust Online Budget Allocation}
\author{
    Xiaohan Wang, Yu Zhang, Guibin Jiang, Bing Cheng, Wei Lin\\
    \normalsize{Meituan} \\
    \normalsize{Beijing, China} \\
    \normalsize{\{wangxiaohan17, zhangyu409, jiangguibin, bing.cheng, linwei31\}@meituan.com}
}
\begin{document}
\maketitle

\begin{abstract}
Marketing optimization, commonly formulated as an online budget allocation problem, has emerged as a pivotal factor in driving user growth. Most existing research addresses this problem by following the principle of 'first predict then optimize' for each individual, which presents challenges related to large-scale counterfactual prediction and solving complexity trade-offs. Note that the practical data quality is uncontrollable, and the solving scale tends to be tens of millions. Therefore, the existing approaches make the robust budget allocation non-trivial, especially in industrial scenarios with considerable data noise. To this end, this paper proposes a novel approach that solves the problem from the cluster perspective. Specifically, we propose a multi-task representation network to learn the inherent attributes of individuals and project the original features into high-dimension hidden representations through the first two layers of the trained network. Then, we divide these hidden representations into $K$ groups through partitioning-based clustering, thus reformulating the problem as an integer stochastic programming problem under different total budgets. 
Finally, we distill the representation module and clustering model into a multi-category model to facilitate online deployment. Offline experiments validate the effectiveness and superiority of our approach compared to six state-of-the-art marketing optimization algorithms. Online A/B tests on the Meituan platform indicate that the approach outperforms the online algorithm by 0.53\% and 0.65\%, considering order volume (OV) and gross merchandise volume (GMV), respectively.
\end{abstract}

\keywords{Online budget allocation \and Marketing optimization \and Representation learning \and Treatment effect estimation}

\section{Introduction}

As one of the most cost-effective business activities in E-commerce and life service platforms, marketing optimization plays an increasingly significant role in promoting user growth \cite{fang2024backdoor, albert2022commerce}. The essence of marketing optimization is to facilitate the increase in revenue, such as order volume (OV) and gross merchandise volume (GMV) within limited cost, which tends to be formulated as a budget allocation problem \cite{vanderschueren2024metalearners}. However, revenue and cost values usually change dynamically caused by market fluctuations, and the resulting changes in the optimization parameters make it non-trivial to directly resolve the problem using mathematical optimization. 

Most existing research solves this challenge following the principle of 'first prediction then optimization', and the resulting approaches are usually named two-stage methods \cite{de2024uplift}. The first stage usually predicts the revenue and cost through machine learning methods, and then the second stage optimizes the budget allocation problem with operational research (OR) based on predicted parameters. As prediction tends to be counterfactual, numerous studies adopt causal inference methods such as causal forest (CF) and Dragonnet. However, the theoretical optimal solution in OR struggles to remain global optimum because of the accumulative errors. The practical performance of two-stage methods may even be inferior to heuristic searching methods \cite{zhou2023direct}. To alleviate this issue, decision-focused learning (DFL) proposes to introduce the decision loss from the OR optimization to training, thus integrating the two-stage methods into an end-to-end system \cite{wilder2019melding}. Related research usually concentrates on stably estimating the gradient of the decision loss since the OR objective function is non-differentiable. For instance, recent work applies Lagrangian duality and policy learning to DFL and achieves superior performance in practical scenarios \cite{zhou2024decision}. 

Nonetheless, the existing methods for marketing optimization usually face two key challenges:

1) \textbf{Noise influence}. The prediction task is dedicated to forecasting individuals' counterfactual revenues and costs based on historical data. However, the individual-level prediction is extremely sensitive to the noise in data, which puts a higher requirement on the quality of collected data. Collecting pure data without noise is unrealistic, especially in practical scenarios with high GMV variance, such as hotel pricing. For example, an individual may exhibit distinguishable marketing attributes when paying for the hotel at public expense or self-funded, which confuses the model during training and leads to unsatisfactory algorithmic performance.

2) \textbf{Large-scale counterfactual prediction}. In practical marketing optimization, the individuals' quantity for prediction tends to be tens of millions. Additionally, we can not collect historical data for each individual with all treatments, making the prediction a counterfactual task. The massive individual quantity in practical scenarios and the counterfactual prediction make accurate predictions of individual-level revenues and costs non-trivial, thus resulting in the decrease in marketing optimization.

3) \textbf{Time-presicion trade-off}. The time complexity of OR increases exponentially with the number of individuals and treatments, resulting in the solving time under large-scale individuals becomes unacceptable even in offline computation. Besides, current research also finds that the global optimal solution may overfit the marketing optimization considering the prediction error in the first stage \cite{mandi2022decision}. To this end, researchers usually apply Lagrangian duality or heuristic searching to replace the origin OR, thus accelerating the offline solving speed. However, these methods need to balance the time-consumption and the solving precision. A second stage with fast solving-speed usually faces low solving-presicion issues.

To cope with the above challenges in existing market optimization, this paper proposes to solve the online resource allocation problem from the perspective of clusters rather than individuals. The core idea is to learn individuals' inherent preferences rather than specific prediction values. Specifically, we first propose the multi-task monotonic network to model individuals' preferences with historical data. Then, the hidden representations of the network are separated for clustering individuals. Based on the statistical values of clusters, we leverage stochastic programming to solve the allocation strategies under budgets in seconds. Finally, we distill the hidden representation module and the clustering model into a multi-classification model for online deployment, making the entire online link respond to immediate requests in milliseconds.

The contributions of the paper are summarized as follows:

1) We propose a cluster-based marketing optimization approach to simultaneously cope with the challenges of noise influence, large-scale counterfactual prediction, and time-precision trade-off. Compared to existing research that adopts individual-level prediction, we first propose group-level clustering and successfully apply it in practical scenarios.

2) We propose hidden representation clustering (HRC) to improve the robustness of marketing optimization and model the uplift between multi-valued treatments based on clusters rather than individuals. Compared to individual-level methods such as two-stage methods and DFL, HRC avoids the direct prediction of large-scale counterfactual revenues and thus remains robust to data noise.

3) To effectively train the hidden representation module, we propose a multi-task monotonic network to learn the inherent attributes of individuals. The network can be trained alternatively based on randomized controlled trial (RCT) or observation (OBS) data. For OBS training data, we adopt the hypernetwork to add monotonic constraints of multi-valued treatments.

4) We introduce stochastic programming with variance aversion to solve the optimal strategies under different budgets in the second stage. We leverage the statistical values of the clusters rather than the predicted values of the individuals as the parameters for solving, which greatly reduces the complexity of solving while maintaining accuracy with respect to the dataset.

5) Experiment results on practical scenarios validated the effectiveness and superiority of HRC compared to the other six mainstream marketing optimization methods. Online A/B test indicates that the proposed HRC can stably outperform two-stage methods and DFL.

The rest of the paper is organized as follows: The related work on the existing research of two-stage methods and DFL is reviewed in Section 2. The problem formulation of the studied online resource allocation problem is described in Section 3. Section 4 introduces algorithmic details of the proposed approach in steps. Section 5 presents the offline and online experimental results, and Section 6 concludes the paper.

\section{Related Work}
\subsection{Two-stage Methods}
Two-stage methods follow the principle of 'first predict then optimize', which has been widely studied in industrial applications, including marketing optimization \cite{geng2024benchmarking, sun2024end, he2024rankability, vanderschueren2024metalearners}, E-commerce \cite{albert2022commerce}, and advertising bidding \cite{betlei2024maximizing}. The prediction stage usually models the input through machine learning and causal inference, such as causal forest \cite{ai2022lbcf}, S-learner \cite{chen2024practical}, and causal representation learning \cite{shi2019adapting, nie2021vcnet}. Then, the optimization stage solves the allocation problem under budget constraints through operation research (OR) based on parameters predicted by the first stage. Since decision variables often number in the millions, most of the works accelerate the resolving speed through Lagrangian duality \cite{zhou2023direct}. Additionally, previous studies have validated the effectiveness of heuristic search with expert knowledge in improving the generalization \cite{kamran2024learning, ai2024improve}. In recent years, the related work on two-stage methods has concentrated on monotonic constraints \cite{wang2023multi, sun2024end}, continuous treatment optimization \cite{de2024uplift}, long-tail label distributions \cite{he2024rankability}, ranking loss functions \cite{kamran2024learning}, and dynamic operational contexts \cite{vanderschueren2024metalearners}. Nonetheless, little research considers the prediction and optimization under noisy data, which may limit their application in practical industrial scenarios with uncertain data noise.

\subsection{Decision-Focused Learning}
Compared to two-stage methods, decision-focused learning (DFL) introduces the decision loss from the optimization stage to the training of the prediction model \cite{wilder2019melding}. As most of objective functions in two-stage methods are non-differentiable, the primary challenge of DFL concentrates on unbiased gradient estimation. For example, recent research alleviates non-differentiable challenge from perspectives of policy learning \cite{zhou2024decision}, interior point \cite{mandi2010interior}, implicit maximum likelihood estimation \cite{niepert2021implicit}, and differentiable optimizers \cite{berthet2020learning, vlastelica2019differentiation}. Although these studies have demonstrated the unbiasedness of the gradient estimation of the decision loss function, the high variance caused by the estimation may lead to training fluctuations. To tackle this issue, the existing research propose to improve the robustness of DFL through rank-score learning \cite{mandi2022decision, zhou2023direct}, black-box optimization \cite{vlastelica2019differentiation}, surrogate loss \cite{shah2022decision}, and stochastic programming \cite{donti2017task}. Nonetheless, most existing work on DFL overlooks the consideration of data noise in training and optimization data. Although the introduced decision loss helps the prediction model learn end-to-end knowledge from the optimization stage, it also increases the risk of overfitting to the optimization data simultaneously.

\section{Problem Formulation}

This section presents the mathematical formulation of the online budget allocation problem. Suppose $r_{i,j}$ and $c_{i,j}$ represent revenue and cost of the $i$-th individual under the $j$-th treatment, where $i \in \{1,2,..., N\}$ and $j \in \{1,2,..., M\}$. Given a total budget of $B$, the online budget allocation is formulated as an integer programming problem as follows:
\begin{equation} \label{eqp1}
\begin{aligned}
    \max &\sum_{i=1}^{N} \sum_{j=1}^{M} r_{i,j} \cdot x_{i,j} \\
    s.t. &\sum_{i=1}^{N} \sum_{j=1}^{M} c_{i,j} \cdot x_{i,j} \leq B \\
    &\sum_{j=1}^{M} x_{i,j} = 1, \forall i \in \{1,2,...,N\} \\
    &x_{i,j} \in \{0, 1\}, \forall i \in \{1,2,..., N\}, \forall j \in \{1,2,..., M\}
\end{aligned}
\end{equation}
where $x_{i,j}$ denotes the binary decision variable indicating whether to assign the $j$-th treatment to the $i$-th individual. The budget $B$ usually fluctuates greatly in a practical marketing environment, so this formulation will be resolved multiple times to suit distinguishable budgets. 

\section{Proposed Method}

\subsection{Overall Framework}
\begin{figure}[h]
\centering
  \includegraphics[width=\linewidth]{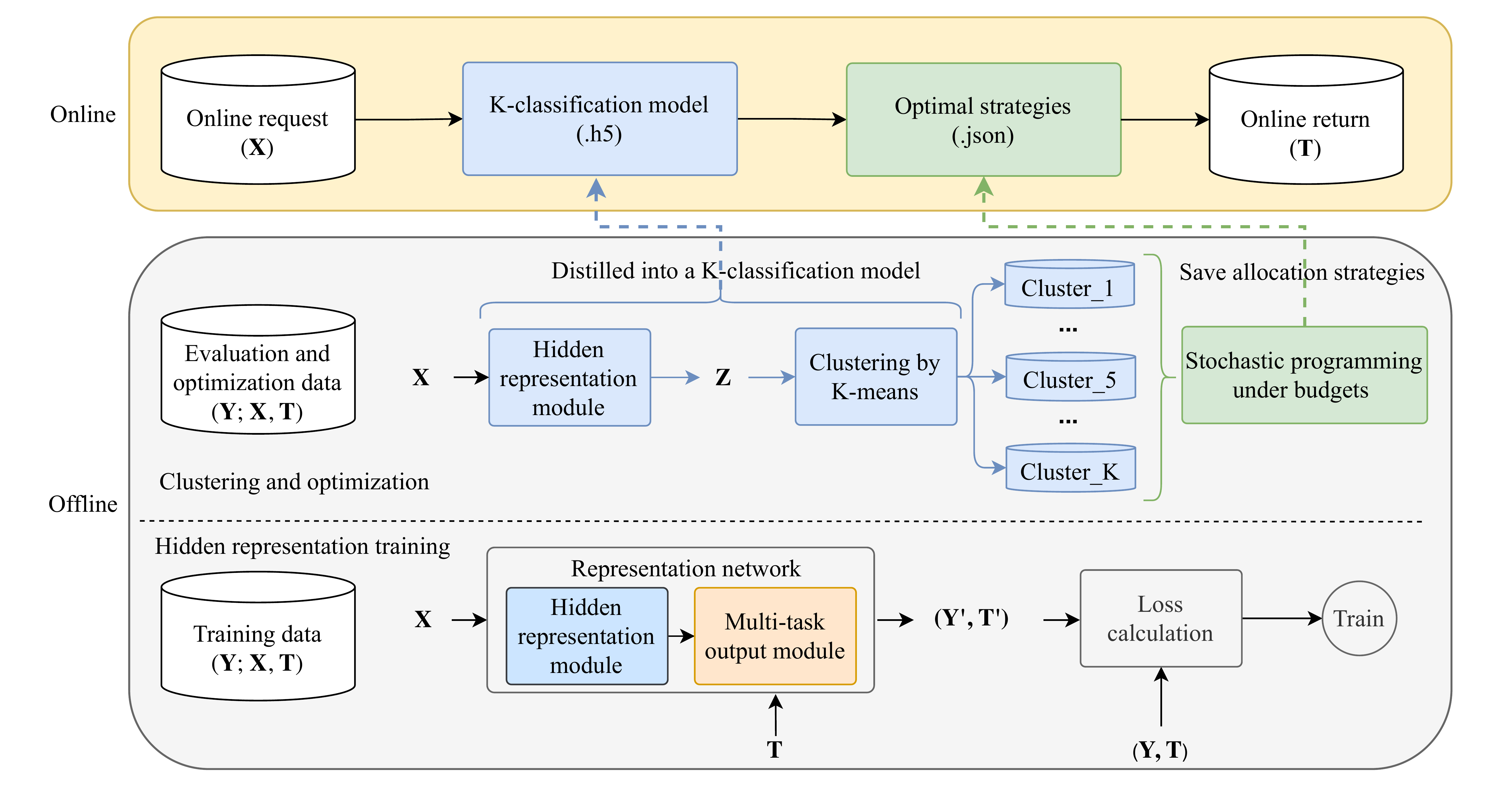}
  \caption{The overall framework of the proposed HRC.}
  \label{fig:f1}
\end{figure}

Figure \ref{fig:f1} indicates the overall framework that contains offline computation and online deployment. The offline computation includes three steps: First, we train a multi-task representation network with collected RCT data or OBS data. Then, the hidden representation module of the trained network projects the original individual features into hidden representations, and these individuals are clustered into $K$ groups according to these hidden representations. Finally, stochastic programming is employed to resolve optimal allocation strategies for $K$ groups under diverse budgets. These three steps will be introduced in detail by Sections \ref{sec1}, \ref{sec2}, and \ref{sec3}, respectively.

For online deployment, we distill the hidden representation module and the clustering model into one K-classification model. Therefore, an online request with the feature $x$ can be quickly predicted to a group index within the scope of $\{1,2,..., K\}$. Then, the saved allocation strategies map the index to the optimal treatment assignment. Such a simple online workflow ensures the immediate execution of millions of requests in milliseconds.

\subsection{Multi-Task Representation Network} \label{sec1}
\begin{figure}[h]
\centering
  \includegraphics[width=0.6\linewidth]{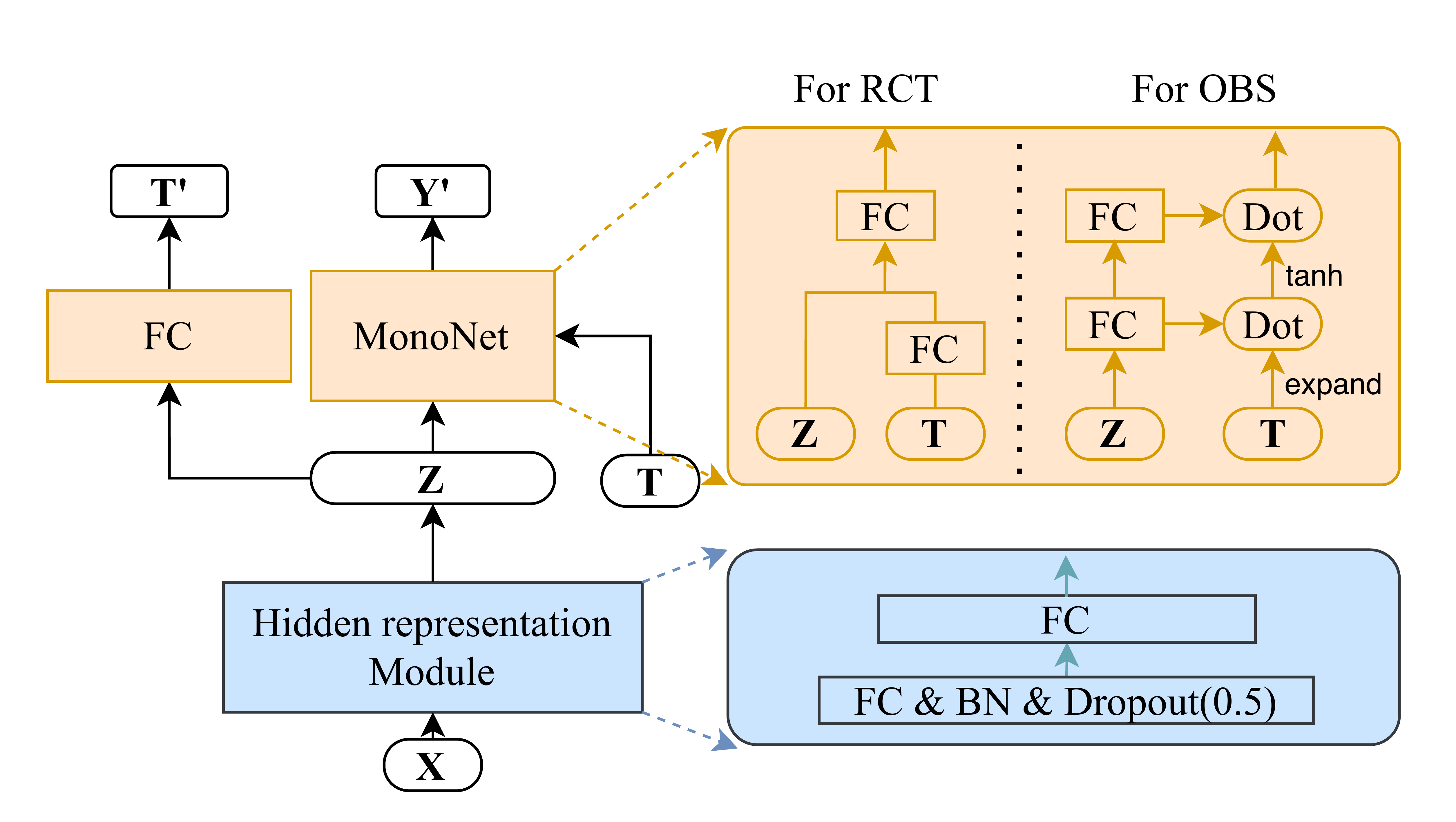}
  \caption{The model structure of the multi-task representation network.}
  \label{fig:f2}
\end{figure}

Figure \ref{fig:f2} indicates the structure of the multi-task representation network with two primary modules. The hidden representation module with parameters $\theta$ projects the original features $X$ into hidden representations $Z^{nn}(X; \theta)$. Then the multi-task output module predicts revenues $R^{nn}(T, Z; \phi_1)$ and propensity scores $g^{nn}(Z; \phi_2)$ based on $Z^{nn}(X; \theta)$, where $\phi_1, \phi_2$ represent parameters of revenue and propensity heads, respectively. The main structure of the representation network is similar to Dragonnet \cite{shi2019adapting}, and VCnet \cite{nie2021vcnet}. The representation network is trained by minimizing (\ref{eqm1}).
\begin{equation} \label{eqm1}
    \hat{\theta} = \mathop{argmin}\limits_{\theta, \phi_1, \phi_2} \frac{1}{N} \sum_{i=1}^{N} [(R^{nn}(t_i, z_i; \phi_1) - y_i)^2 + \alpha (g^{nn}(z_i; \phi_2) - t_i)^2 ]
\end{equation}
where $\alpha > 0$ represents a hyperparameter weighting the propensity loss. The training target is to optimize the parameters $\theta$ to capture the hidden representations of $X$ effectively. Regarding the revenue head $R^{nn}(T, Z; \phi_1)$, we provide two alternative networks, one for RCT data and another for observation data, separately. As indicated in the upper right corner of \ref{fig:f2}, we directly concatenate the embedding of $T$ and the hidden representation $Z$ for RCT training data. For observation training data, we employ the hypernetwork represented by equation (\ref{eqm2}).
\begin{equation} \label{eqm2}
    R^{nn}(T, Z; \phi_1) = |\omega^{nn}(Z; \phi_{1\_2})| \cdot \tanh(|\omega^{nn}(Z; \phi_{1\_1})| \cdot Repeat(T)) 
\end{equation}
where $|\omega^{nn}(\cdot; \phi_{1\_x})|, x \in \{1,2\}$ denotes a neural network with parameters $\phi_{1\_x}$ projecting $Z$ to a positive weight matrix. $Repeat(T)$ expands the dimension of $T$ by repeating the value of $T$, i.e., $Repeat(t = 1) \rightarrow [1, 1 , ..., 1]$. With the assistance of (\ref{eqm2}), monotonicity can be enforced through a constraint on the relationship between $R^{nn}(t, Z; \phi_1)$ and $t$ as indicated by (\ref{eqm3})
\begin{equation} \label{eqm3}
    \frac{\partial R^{nn}(t, Z; \phi_1)}{\partial t}  > 0
\end{equation}
The predicted revenue is proportional to the treatment value with this monotonic constraint, which will be helpful for multi-treatment counterfactual prediction problems based on biased OBS data.

\subsection{Hidden Representation Clustering} \label{sec2}

Instead of grouping $X$ directly based on output $R^{nn}(T, Z; \phi_1)$ like two-stage methods, we utilize the trained hidden representation module to project $X$ to $Z^{nn}(X; \theta)$. The detailed theoretical analysis of hidden representation clustering compared to direct output clustering can be found in Appendix. A. 

The hidden representations $Z$ are clustered into $K$ groups as indicated in the middle level of Figure \ref{fig:f1}. The distribution distance $d(z_1, z_2)$ is defined by $d(z_1, z_2) = || z_1 - z_2||_2$, and K-Means is implemented as a quantization function $Q_z: \mathcal{Z} \rightarrow \{1,2,...,K\}$ as introduced by (\ref{eqm3_1}).
\begin{equation} \label{eqm3_1}
    Q_z(z) = \arg \min_k ||z - \mu_k||^2_F
\end{equation}
where $\{\mu_k\}_{k=1}^K$ represents the clustering centers of $K$ clusters. For each cluster, the mean and variance of revenues and costs are calculated under $M$ treatments. These statistical values are denoted by (\ref{eqm4}).
\begin{equation} \label{eqm4}
\{\hat{\mu}_{r}^{(i,j)}, \hat{\sigma}_{r}^{(i,j)} , \hat{\mu}_{c}^{(i,j)}, \hat{\sigma}_{c}^{(i,j)}, \omega_i\}, \forall i \in \{1,2,...,K\}, \forall j \in \{1,2,..., M\}
\end{equation}
where $\hat{\mu}_{r}^{(i,j)}, \hat{\sigma}_{r}^{(i,j)} , \hat{\mu}_{c}^{(i,j)}, \hat{\sigma}_{c}^{(i,j)}, \omega_i$ specifies the revenue mean, the revenue variance, the cost mean, the cost variance, and the individual quantity in the $i$-th cluster under the $j$-th treatment, respectively. 

\subsection{Stochastic Programming with K Clusters} \label{sec3}

Based on statistical values of $K$ clusters, we reformulate the original integer programming problem in (\ref{eqp1}) as a stochastic programming problem in (\ref{eqp2}).
\begin{equation} \label{eqp2}
\begin{aligned} 
    \max &\sum_{i=1}^{K} \sum_{j=1}^{M} \omega_i \cdot x_{i,j} \cdot (\hat{\mu}_{r}^{(i,j)}  - \lambda \hat{\sigma}_{r}^{(i,j)} - \kappa \hat{\sigma}_{c}^{(i,j)}) \\
    s.t. &\sum_{i=1}^{K} \sum_{j=1}^{M} \omega_i \cdot \hat{\mu}_{c}^{(i,j)} \cdot x_{i,j} \leq B \\
    &\sum_{j=1}^{M} x_{i,j} = 1, \forall i \in \{1,2,...,K\} \\
    &x_{i,j} \in \{0, 1\}, \forall i \in \{1,2,..., K\}, \forall j \in \{1,2,..., M\}
\end{aligned}
\end{equation}
where $\lambda, \kappa$ represent two factors determining the risk aversion of variance. Generally, the magnitude of $K$ is much smaller than that of $N$. Therefore, the solving speed of the stochastic programming problem of (\ref{eqp2}) is much faster than the original integer programming problem of (\ref{eqp1}). As a result, optimal allocation strategies under diverse budgets $B$ could be quickly resolved and saved as a mapping library for online requests.

\subsection{Multi-Classification Distillation}

Finally, we distill the hidden representation module and the K-Means model into one K-category neural network model by minimizing the cross entropy loss of (\ref{eqm5}).
\begin{equation} \label{eqm5} 
    \hat{\Phi} =\mathop{argmin}\limits_{\Phi} -\frac{1}{N} \sum_{i=1}^{N} \sum_{j=1}^{K}y_{i,j} \log(p^{nn}(c=j|x_i; \Phi))
\end{equation}
where $p^{nn}(c=j|x_i; \Phi)$ represents the probability of predicting $x_i$ as the $j$-th group, $\Phi$ denotes the parameters of the network. The distillation model integrates these two models into one, thus 1) simplifying the online deployment and 2) speeding up online responses to requests. Additionally, the distilled model can be easily fine-tuned with incremental data.

\section{Experiments}

This section validates the effectiveness and superiority of the proposed HRC through 1) offline comparisons with mainstream marketing optimization algorithms and 2) A/B tests with online-deployed algorithms in the Meituan platform.

\subsection{Dataset and Comparison Methods}

\textbf{Dataset} Five weeks of industrial data are collected from the Meituan platform for offline comparison, and a brief introduction of these data is reported in \autoref{expt1}. All original values from the Meituan platform are scaled and masked regarding confidential issues. As indicated by \autoref{expt1}, the data distributions between the Winter and Summer periods are distinguishable. The data are collected from practical shop pricing scenarios in the Meituan platform, where online shops provide daily discounts to consumers. The discount of an item in a shop for all consumers is the same to avoid price discrimination, but the discount rates vary from item to item and usually fluctuate by day. Six discount rates of $\{-5\%,-6\%,-7\%,-8\%,-9\%,-10\%\}$ serve as $6$ treatments. The datasets contain $44$ millions of samples in total, and each sample has $450$ features, one treatment label, one cost label, and $2$ revenue labels (OV and GMV).

\begin{table}
	\caption{Five weeks of data collected from the Meituan platform. Values of cost, OV, and GMV represent mean / std.}
	\centering
	\begin{tabular}{ccccccc}
		\toprule
		Name & Description   & Used for & Cost & OV & GMV  & Size \\
		\midrule
		Meituan-w1 & RCT, Winter& Train & 0.321/3.639 & 0.851/2.231 &  0.811/3.949 & (6e6, 4e2) \\
		Meituan-w2 & RCT, Winter& Test & 0.465/4.438 & 0.843/2.239 & 0.823/4.108  & (7e6, 4e2) \\
            Meituan-w3 & RCT, Summer& Test & 1.397/7.613 & 0.928/2.484 &  1.211/5.206 & (1e7, 4e2) \\
		Meituan-w4 & OBS, Winter& Train & 0.699/4.519 & 0.919/2.33 &  0.846/3.546 & (1.7e7, 4e2) \\
		Meituan-w5 & RCT, Winter& Test & 0.718/4.557 & 0.921/2.36 & 0.85/3.554  & (4e6, 4e2) \\
		\bottomrule
	\end{tabular}
	\label{expt1}
\end{table}

\textbf{Comparison methods} We compare six other SOTA marketing optimization algorithms in offline experiments, including both two-stage methods and DFLs. These methods include decision-focused learning with policy learning (DFL-PL), decision-focused learning with maximum entropy regularizer (DFL-MER), single learner with heuristic search (HEU-Slearner), single learner with Lagrangian duality (LANG-Slearner), causal forest with heuristic search (HEU-CF), and causal forest with Lagrangian duality (LANG-CF). To verify the causal debiasing ability of the proposed monotonic network, we add two extra algorithms, namely Dragonnet with Lagrangian duality (LANG-Drag) and Dragonnet with heuristic search (HEU-Drag) \cite{shi2019adapting}. The implementation details and training hyper-parameters of these methods can be found in Appendix B.


\textbf{Metrics} The offline comparison mainly utilizes the expected outcome metric (EOM) to simulate the marketing performance of offline experiments. EOM is a popular metric for marketing optimization in industry \cite{ai2022lbcf, zhao2017uplift, zhou2023direct, zhou2024decision}. EOM can derive an unbiased estimate of revenues (such as OV or GMV) for any given budget allocation strategy, thus simulating the offline performance of marketing optimization algorithms with collected RCT data. We use EOM as the primary metric to compare performances in offline datasets.

\subsection{Offline Evaluation: Comparison on Meituan Datasets}


We compare the algorithms' performance in Meituan datasets from three perspectives: solution quality, generalization to diverse data distributions, and causal inference ability under observational data. 

\textbf{Solution Quality.} In order to evaluate the solution quality of algorithms, we train models based on Meituan-w1 and evaluate their EOMs based on Meituan-w2. Note that the cost and revenue distributions of Meituan-w1 and Meituan-w2 are similar as listed by \autoref{expt1}. We train models regarding OV as the revenue objective and evaluate both the OV and GMV performances based on EOM. The corresponding EOM results of OV and GMV are indicated by two subfigures in \autoref{expf1}, respectively. Under six budget points within the scope of $(0.4, 0.6)$, HRC achieves the most outstanding performance regarding OV-EOM lines in \autoref{expf1ov}. Additionally, the GMV of HRC also indicates competitive results compared to the other six algorithms in \autoref{expf1gmv}. 

\begin{figure}[htbp]
    \centering
    \subcaptionbox{OV \label{expf1ov}}{\includegraphics[width=0.4\textwidth]{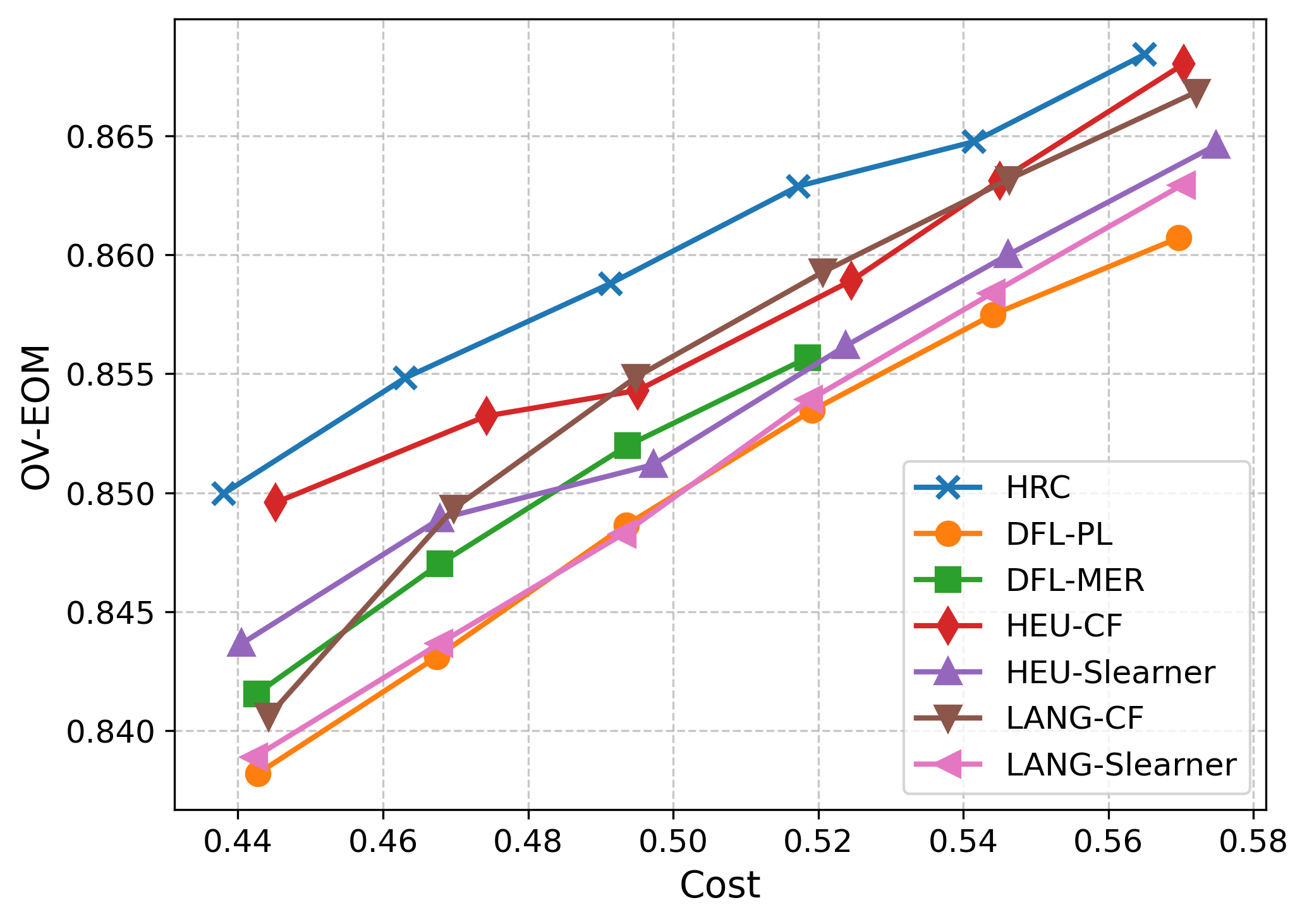}} 
    \subcaptionbox{GMV label\label{expf1gmv}}{\includegraphics[width=0.4\textwidth]{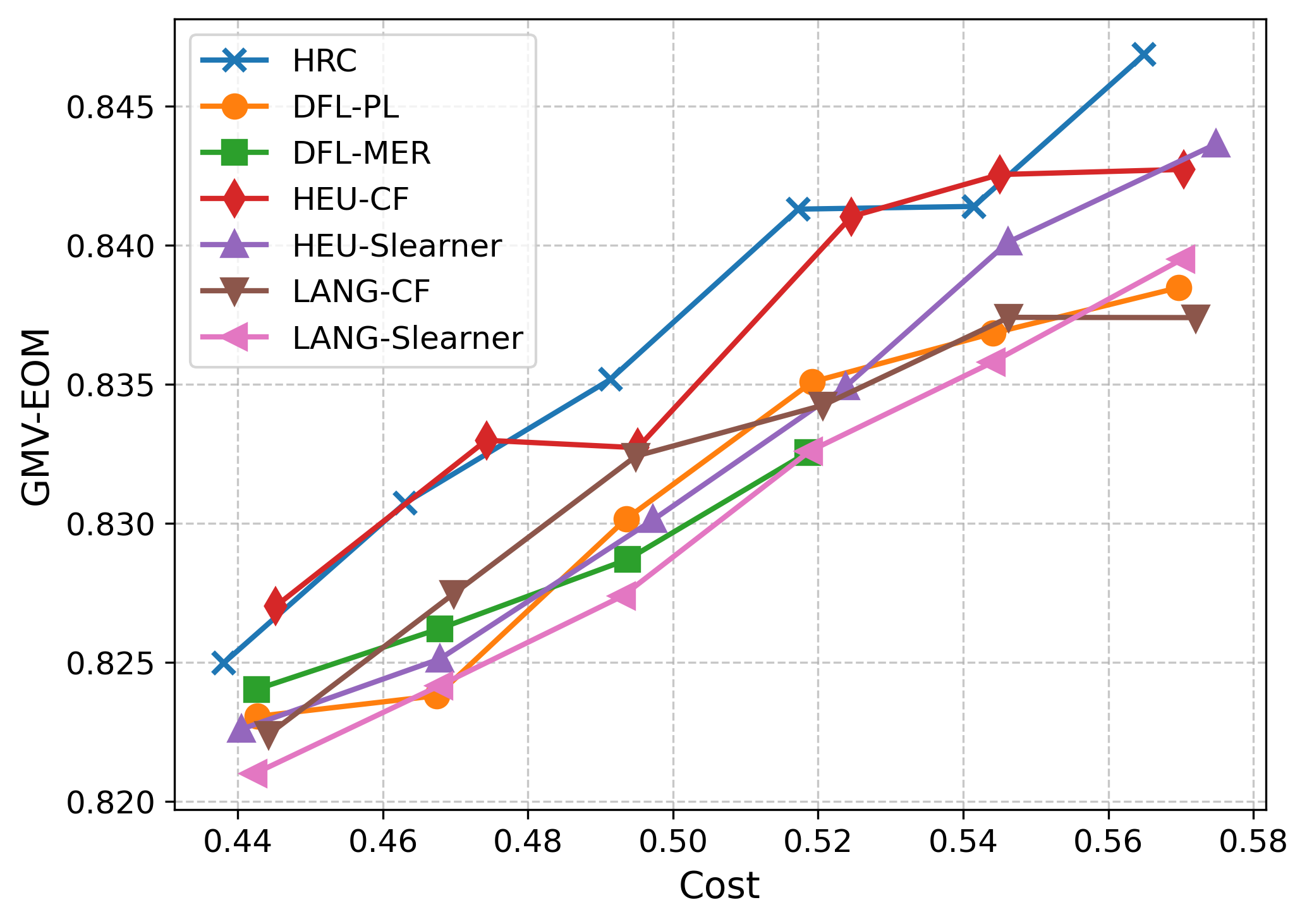}} 
    \caption{EOM curves of OV and GMV under different budgets}
    \label{expf1}
\end{figure}

\textbf{Generalization.} To verify the generalization to distinguishable data distributions, we evaluate the EOM performances of OV and GMV in Meituan-w3. Note that both cost and revenue distributions between Meituian-w1 (training dataset) and Meituan-w3 (evaluation dataset) are in difference. \autoref{expt2} indicates the EOM results under five budget points, where HRC outperforms the second-best approach for $\{0.1,0.1,0.1,0.1,0.1\}$ under budgets of $\{1,2,3,4,5\}$, respectively. To compare the general performance, we plot the box lines of OV and GMV EOMs in \autoref{expf2}, where HRC shows the highest average OV/GMV. Therefore, the generalization of HRC is still competitive compared to other methods, which ensures the stability of performance when the budget environment varies a lot.

\begin{figure}[htbp]
    \centering
    \subcaptionbox{OV \label{expf2ov}}{\includegraphics[width=0.4\textwidth]{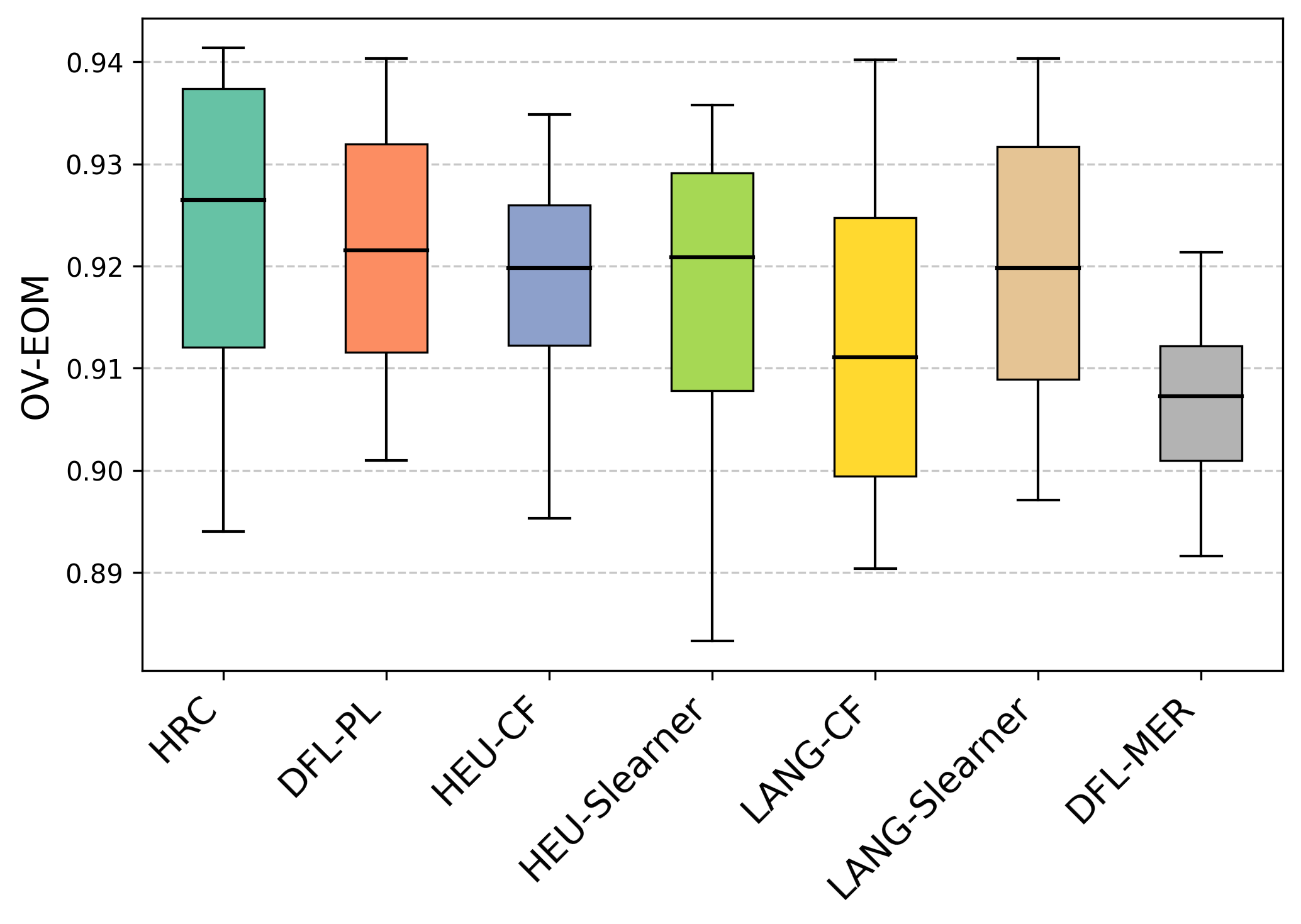}} 
    \subcaptionbox{GMV label\label{expf2gmv}}{\includegraphics[width=0.4\textwidth]{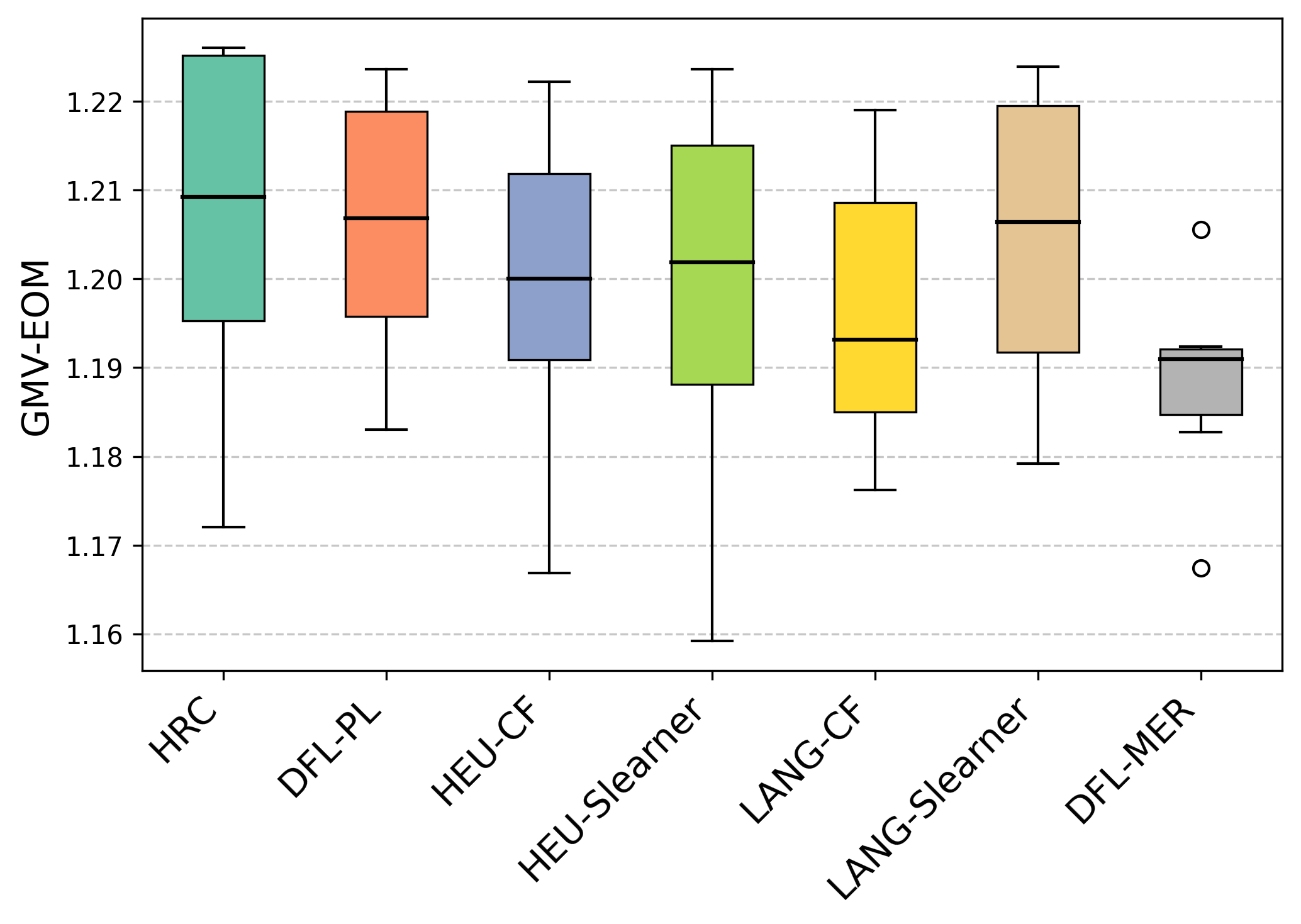}} 
    \caption{EOM boxlines of OV and GMV under different budgets}
    \label{expf2}
\end{figure}

\textbf{Performance under Observation Data.} Then, we evaluate the causal debiasing capability of the monotonic network when facing OBS data during training. In this comparison, we train models with Meituan-w4 collected from the online-algorithm results and evaluate the performance on Meituan-w5. \autoref{expf3} indicates the EOM results of OV and GMV, where HRC-Mono represents the proposed HRC with monotonic representation network. Both curves of OV and GMV indicate that the proposed HRC-mono outperforms the other four popular causal marketing optimization algorithms.

\begin{figure}[htbp]
    \centering
    \subcaptionbox{OV \label{expf3ov}}{\includegraphics[width=0.4\textwidth]{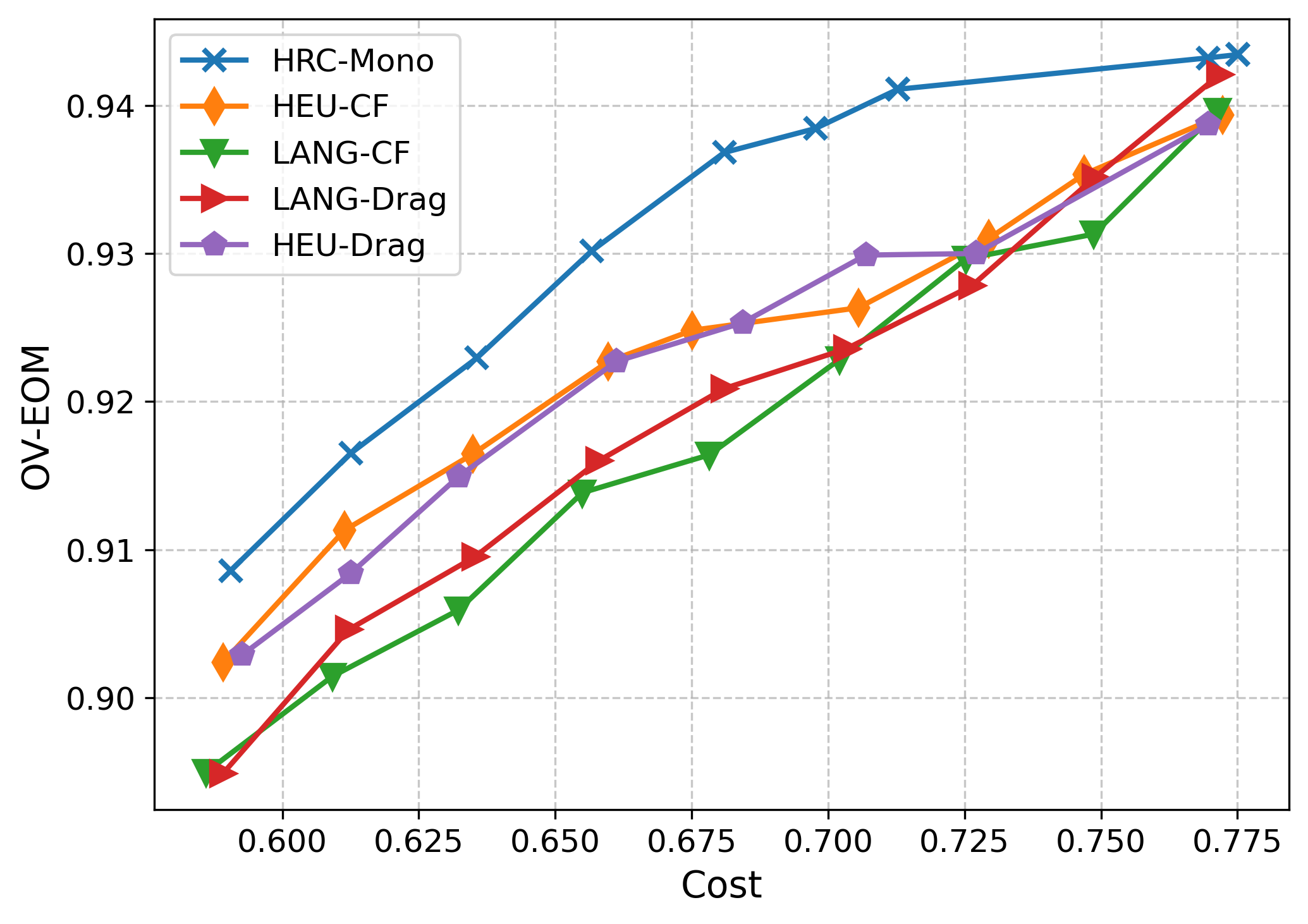}} 
    \subcaptionbox{GMV \label{expf3gmv}}{\includegraphics[width=0.4\textwidth]{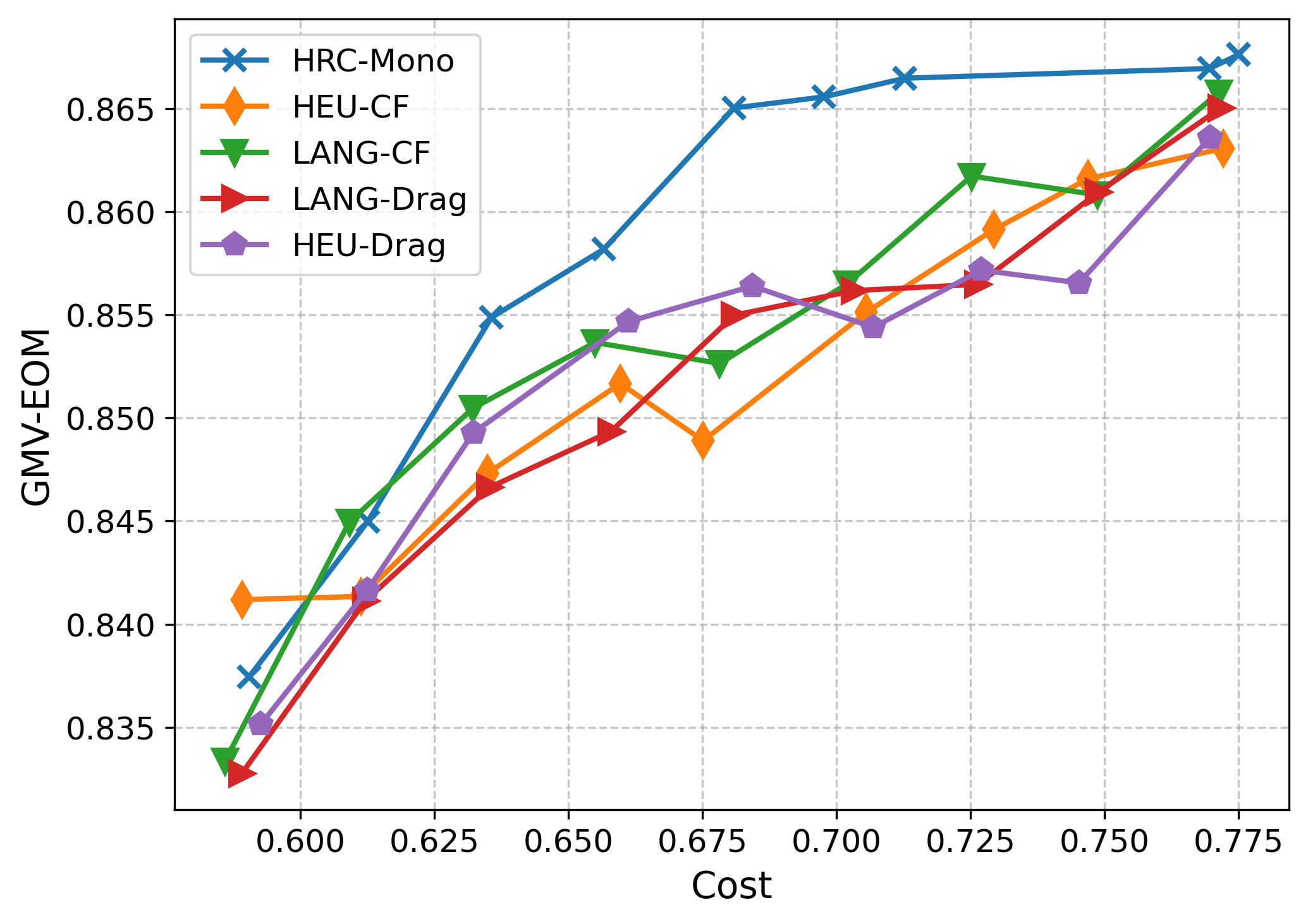}} 
    \caption{EOM curves of OV and GMV. The models are trained under observation data.}
    \label{expf3}
\end{figure}

\subsection{Online A/B Test}

\begin{table}
	\caption{Statistical results of online A/B tests}
	\centering
	\begin{tabular}{ccccccc}
		\toprule
		A v.s. B & Period & $\overline{\Delta OV / OV_B}$   & $\overline{\Delta GMV / GMV_B}$ & $\overline{\Delta Cost / Cost_B}$ \\
		\midrule
		HRC v.s. Slearner-HEU & Week1$\sim$Week2 & 0.13\% & 0.23\% & 0.00296pp \\
		HRC v.s. DFL-PL & Week3$\sim$Week4 & 0.53\% & 0.65\% & 0.03043pp \\
		\bottomrule
	\end{tabular}
	\label{expt3}
\end{table}

Finally, we evaluate the practical performance of the proposed HRC on the online platform. To efficiently compare the practical performance, we set up two large-scale online data-flows for online comparisons of HRC and the other algorithm. The comparison algorithms include DFL-PL and Slearner-HEU, which have been deployed online for weeks. We record online OV and GMV gaps (\%) between HRC and Slearner-HEU for the first two weeks. Then, we leverage another two weeks to record the comparison results between HRC and DFL-PL. \autoref{expf4qfs} and \autoref{expf4dfl} indicate the daily results of HRC v.s. Slearner-HEU and DFL-PL, respectively. Those results in \autoref{expf4} indicate that HRC can stably outperform the other two algorithms in the practical Meituan scenario. \autoref{expt3} reports the statistical results in the total one-month of online observation. In terms of OV and GMV, HRC outperforms Slearner-HEU by 11\% and 22\%, respectively.
With respect to OV and GMV, HRC surpasses DFL-PL by 22\% and 33\%, respectively. Note that all the costs are controlled in a reasonable range to clip the budgets between the two methods.

\begin{figure}[htbp]
    \centering
    \subcaptionbox{Week1$\sim$Week2 \label{expf4qfs}}{\includegraphics[width=0.72\textwidth]{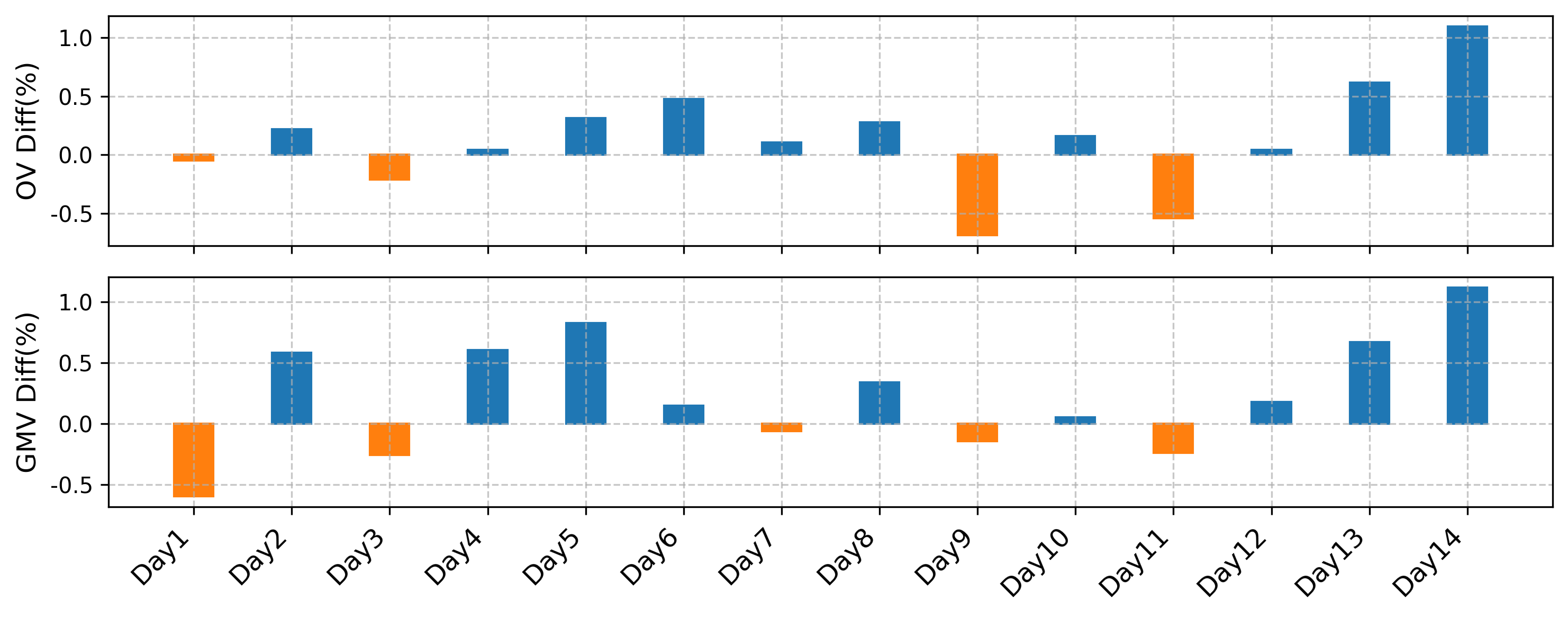}} 
    \subcaptionbox{Week3$\sim$Week4 \label{expf4dfl}}{\includegraphics[width=0.72\textwidth]{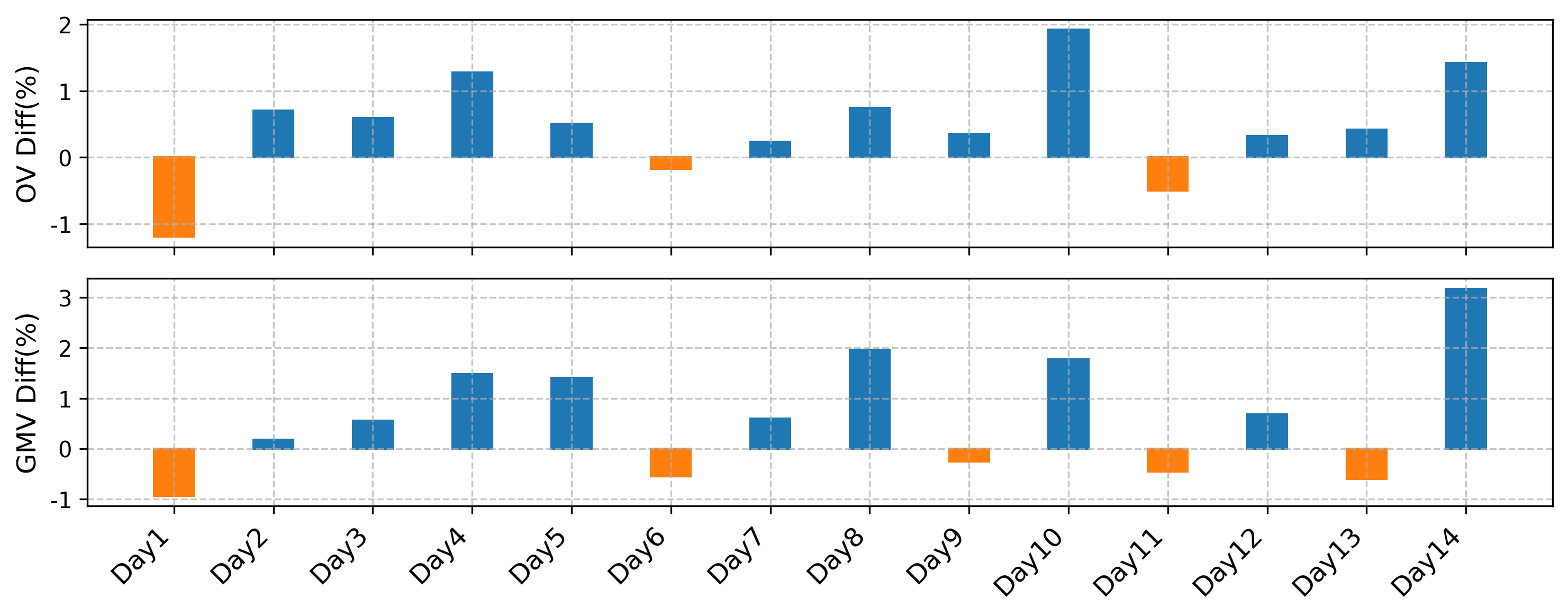}} 
    \caption{Daily online A/B results}
    \label{expf4}
\end{figure}

\section{Conclusion}

This paper proposes HRC for robust online budget allocation in marketing optimization. In contrast to existing research that models the problem from an individual perspective, this study leverages a cluster-based approach to cope with data noise in practical scenarios. A multi-task representation network is proposed to project original individuals' features into hidden representations for the following clustering. The stochastic programming based on $K$ clusters ensures the robustness of optimization while greatly improving the offline solving speed. Additionally, the distilled multi-classification models simplify the online deployment by facilitating the online response. Finally, offline and online comparison experiments validate the effectiveness and superiority of the proposed HRC. 

\bibliographystyle{unsrtnat}
\bibliography{references}

\appendix
\renewcommand{\thesection}{Appendix \Alph{section}}

\section{Theoretical Proof of Hidden Representation Clustering}

\begin{theorem}
When the Jacobian matrix of the latent variable mapping $g(x)$ satisfies the low sensitivity condition, the hidden representation $z$ is more robust to the $x$ noise than the output $y$, i.e.,  $E||\tilde{z} - z||_F^2 \ll E||\tilde{y} - y||_F^2$ when $ \tilde{x} = x + \Delta$.
\end{theorem}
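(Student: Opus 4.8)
The plan is to write the trained network as a composition $y=\psi\bigl(g(x)\bigr)$, where $z=g(x)=Z^{nn}(x;\theta)$ is the hidden representation produced by the first two layers and $\psi$ collects the remaining (revenue and propensity) heads, so that $\tilde z=g(\tilde x)$ and $\tilde y=\psi\bigl(g(\tilde x)\bigr)$ with $\tilde x=x+\Delta$. I would model the noise $\Delta$ as zero-mean with covariance $E[\Delta\Delta^{\top}]=\Sigma$ (for instance $\Sigma=\sigma^{2}I$), independent of $x$. The idea is to propagate $\Delta$ through $g$ and through $\psi\circ g$ by a first-order Taylor expansion, take expectations, and then compare the two resulting quadratic forms using the chain rule together with the low-sensitivity hypothesis on the Jacobian $J_g$ of $g$.

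First I would linearize: $\tilde z-z=J_g(x)\,\Delta+o(\|\Delta\|)$ and, by the chain rule, $\tilde y-y=J_{\psi}(z)\,J_g(x)\,\Delta+o(\|\Delta\|)$. Taking expectations over $\Delta$ yields $E\|\tilde z-z\|_F^{2}=\operatorname{tr}\!\bigl(J_g\Sigma J_g^{\top}\bigr)+o(\operatorname{tr}\Sigma)$ and $E\|\tilde y-y\|_F^{2}=\operatorname{tr}\!\bigl(J_{\psi}J_g\Sigma J_g^{\top}J_{\psi}^{\top}\bigr)+o(\operatorname{tr}\Sigma)$. I would read the low-sensitivity condition as a uniform spectral-norm bound $\|J_g(x)\|_2\le L_g$ on the two-layer map (enforceable through bounded weights or spectral regularization), which gives $E\|\tilde z-z\|_F^{2}\le L_g^{2}\operatorname{tr}\Sigma+o(\operatorname{tr}\Sigma)$. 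For the output I would bound the second quadratic form from below along the actual perturbation direction $u:=J_g(x)\Delta$, writing $E\|\tilde y-y\|_F^{2}=E\|J_{\psi}(z)\,u\|_F^{2}\ge\gamma^{2}\,E\|u\|_F^{2}$, where $\gamma$ is the gain of $\psi$ on the subspace spanned by the columns of $J_g(x)$. Because $\psi$ is fitted to reproduce the (noisy) targets and, in the OBS case, contains the steep $\tanh$-hypernetwork and treatment-dependent rescaling of (\ref{eqm2}), one expects $\gamma\gg L_g$, hence $E\|\tilde z-z\|_F^{2}\ll E\|\tilde y-y\|_F^{2}$, which is the assertion.

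The main obstacle is keeping the comparison honest rather than circular: the inequality holds only if the output head does not contract perturbations below the hidden-representation scale, so a clean statement needs a quantitative separation — for example $\|J_g\|_2\le L_g\le 1\le\gamma\le\sigma_{\min}(J_{\psi})$ on the relevant subspace — or, alternatively, an argument that the training dynamics push $g$ toward a smooth bottleneck while forcing $\psi$ to have large gain on the label-relevant directions so that $\psi$ transmits label noise. Secondary points I would still have to settle are a uniform control of the first-order Taylor remainder (bounded Hessians, or the small-$\|\Delta\|$ regime) and a further expectation over the data distribution of $x$, which requires the Jacobian bounds to hold almost surely or in mean. I would state explicitly that ``$\ll$'' is asymptotic and hypothesis-dependent, since without any assumption on $\psi$ only the weaker, unconditional relation $E\|\tilde y-y\|_F^{2}\le\|J_{\psi}\|_2^{2}\,E\|\tilde z-z\|_F^{2}$ holds.
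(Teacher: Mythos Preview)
Your skeleton matches the paper's exactly: first-order Taylor expansion of $g$ and of $\psi\circ g$, the chain rule $J_y=J_\psi J_g$, and the identity $E\|A\Delta\|_F^2=\sigma_\Delta^2\|A\|_F^2$ for isotropic Gaussian $\Delta$. The two arguments diverge on how the hypothesis is read and on which inequality is invoked. The paper interprets the ``low sensitivity condition'' as $\|\partial y/\partial z\|_F^2\gg 1$---a largeness condition on the \emph{output head} $J_\psi$, not a smallness bound on $J_g$ as you assume---and then applies the submultiplicative bound $\|J_\psi J_g\|_F^2\le\|J_\psi\|_F^2\|J_g\|_F^2$. That is an \emph{upper} bound on $E\|\tilde y-y\|_F^2$, so from $\|J_\psi\|_F^2\gg 1$ the desired conclusion $E\|\tilde z-z\|_F^2\ll E\|\tilde y-y\|_F^2$ does not strictly follow; the paper's step is heuristic at that point. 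Your route---bounding $E\|\tilde y-y\|_F^2$ from \emph{below} by the gain $\gamma$ of $J_\psi$ on the range of $J_g$, and pairing this with an upper bound $L_g^2\operatorname{tr}\Sigma$ on $E\|\tilde z-z\|_F^2$---runs in the logically correct direction and makes explicit what must actually be assumed. You also correctly flag, in your last sentence, that the submultiplicative inequality alone gives only the reverse-direction bound. In short: same linearization, same chain-rule comparison, but the paper treats the ratio via an upper bound and a differently-read hypothesis, while your version isolates the lower-bound requirement and the subspace on which it must hold.
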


\begin{proof}
Suppose that $x$ are disturbed by Gaussian noise $\Delta \sim \mathcal{N}(0, \sigma^2_{\Delta}\mathcal{I})$, and $\tilde{x} = x + \Delta$. The hidden representation $z = g(x) \in \mathcal{R}^{d_z}$, and the output $y = f(z) = f(g(x)) \in \mathcal{R}^{d_y}$, where $g(x), f(z)$ are mapping functions modeled by neural networks.

The Taylor first-order expansion of $z$ is represented by:
\begin{equation}
    \tilde{z} = g(\tilde{x}) \approx g(x) + J_g(x)\Delta
\end{equation}
where $J_g(x) \in \mathcal{R}^{d_z \times d_x}$ denotes the Jacobian matrix of $g(x)$. The perturbation error of $z$ is:
\begin{equation}
    \tilde{z} - z = J_g(x)\Delta
\end{equation}
Then, we can calculate the mean square error of the hidden representation $z$ as:
\begin{equation}
\begin{aligned}
    E||\tilde{z} - z||_F^2 &= Trace\{E[(\tilde{z} -z)(\tilde{z} - z)^T]\} \\
    &=Trace\{J_g(x)E(\Delta\Delta^T)J_g(x)^T\} \\
    &=Trace\{\sigma_{\Delta}^2J_g(x)J_g(x)^T\} \\ 
    &=\sigma_{\Delta}^2Trace\{J_g(x)J_g(x)^T\} \\
    &=\sigma_{\Delta}^2 ||J_g(x) ||_F^2
\end{aligned}
\end{equation}

The Taylor first-order expansion of $y$ is represented by
\begin{equation}
\tilde{y} \approx y + J_y(x) \Delta
\end{equation}
where $J_y(x) = \frac{\partial y}{\partial g(x)} J_g(x)$. The mean square error of $y$ is calculated as:
\begin{equation}
\begin{aligned}
        E||\tilde{y} - y||_F^2 &= E[||J_y(x) \Delta||_F^2] \\
        &=\sigma_{\Delta}^2 ||J_y(x)||_F^2 \\
        &=\sigma_{\Delta}^2 ||\frac{\partial y}{\partial g(x)} J_g(x)||_F^2 \\
        &\leq \sigma_{\Delta}^2 ||\frac{\partial y}{\partial g(x)}||_F^2 || J_g(x)||_F^2 \\
        &= \sigma_{\Delta}^2 ||\frac{\partial y}{\partial g(x)}||_F^2 ||J_g(x) ||_F^2 \\
\end{aligned}
\end{equation}
When the Jacobian matrix of $g(x)$ satisfies the low sensitivity condition \cite{hoffman2019robust}, we have:
\begin{equation}
    ||\frac{\partial y}{\partial g(x)}||_F^2 \gg 1
\end{equation}
Therefore $E||\tilde{z} - z||_F^2 \ll E||\tilde{y} - y||_F^2$, which concludes the proof. We ensure that $g(x)$ meets the low sensitivity condition by adding weight regularization and batch normalization to the hidden representation module \cite{ioffe2015batch}.
\end{proof}

\begin{theorem} 
For the model trained by RCT data, the hidden representations $Z$ are independent of $T$, while the output $Y$ is significantly affected by $T$, i.e., the mutual information $I(T;Z) =0$ and $I(T;Y)>0$.
\end{theorem}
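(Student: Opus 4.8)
The plan is to establish the two mutual-information claims separately, since they rely on different mechanisms: the first ($I(T;Z)=0$) is a structural consequence of the RCT design together with how $Z$ is constructed, while the second ($I(T;Y)>0$) follows from the monotonicity enforced on the revenue head. First I would recall that in a randomized controlled trial the treatment $T$ is assigned independently of the covariates $X$, i.e., $T \perp X$. Since the hidden representation is a deterministic function of $X$ alone, $Z = Z^{nn}(X;\theta) = g(X)$, any measurable function of $X$ is likewise independent of $T$; hence $T \perp Z$, which is equivalent to $I(T;Z)=0$. The only subtlety is that the trained parameters $\hat{\theta}$ depend on the whole dataset (including the realized treatments), so strictly speaking $Z$ is a function of $X$ conditional on $\hat{\theta}$; I would note that once $\hat{\theta}$ is fixed (as it is at inference time, and in the clustering step), $Z$ is a fixed function of $X$ only, so the conditional independence $T \perp Z \mid \hat{\theta}$ holds, and this is the statement that matters for clustering. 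The propensity head $g^{nn}(Z;\phi_2)$ being trained to predict $T$ does not contradict this: under RCT the Bayes-optimal propensity is the constant marginal $\Pr(T=j)$, so at the optimum the propensity loss term in (\ref{eqm1}) does not push $Z$ to encode any information about $T$.

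For the second claim, $I(T;Y)>0$, I would argue that the observed revenue $Y$ genuinely depends on $T$. The cleanest route is to invoke the monotonicity constraint (\ref{eqm3}): because $\partial R^{nn}(t,Z;\phi_1)/\partial t > 0$ strictly, the conditional mean $E[Y \mid T=t, Z]$ is a strictly increasing, hence non-constant, function of $t$ for each fixed $Z$. Combined with $T$ having non-degenerate marginal support over $\{1,\dots,M\}$ and $Z$ having non-degenerate distribution, the conditional distribution $p(y\mid t)$ varies with $t$ — in particular $E[Y\mid T=t]$ is non-constant in $t$ once we integrate over the (treatment-independent, by the RCT argument) law of $Z$. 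Non-constancy of $p(y\mid t)$ in $t$ implies $p(y,t)\neq p(y)p(t)$ on a set of positive measure, which by definition gives $I(T;Y)>0$. I would state this as: if $Y$ were independent of $T$ then $E[Y\mid T=t]$ would be constant in $t$, contradicting the strict monotonicity, so $I(T;Y)>0$.

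The main obstacle, and the place I would be most careful, is making the independence claim $I(T;Z)=0$ rigorous rather than merely intuitive, because the representation is \emph{learned} from data in which $T$ appears. The honest version of the statement is a population-level / test-time statement: conditional on the trained weights, $Z$ is a deterministic transform of $X$, $X\perp T$ by randomization, therefore $Z\perp T$. I would make explicit that this is the regime in which clustering and deployment operate, so the claim is exactly what is needed downstream. A secondary, milder obstacle is that "$I(T;Y)>0$" requires ruling out degenerate edge cases (e.g., a revenue head that has collapsed to a constant, or a treatment distribution supported on a single value); I would dispatch these by appealing to the strict inequality in (\ref{eqm3}) and the assumption that the RCT uses all $M$ treatments with positive probability. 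With those two caveats handled, both halves reduce to one-line arguments — the content is really in correctly framing what "the model trained by RCT data" means for the dependence structure.
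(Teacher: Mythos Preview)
Your argument for $I(T;Z)=0$ is the paper's argument in more direct language: the paper packages the same observation into the chain-rule bound $I(T;Z)\le I(Z;T\mid X)+I(X;T)$ and sets both terms to zero under randomization, which is exactly your ``$Z=g(X)$ with $X\perp T$ forces $Z\perp T$.'' Your remark about conditioning on the trained $\hat\theta$ is a clarification the paper does not make.

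For $I(T;Y)>0$ the routes differ. The paper decomposes $I(Y;T)=I(Y;T\mid Z)+I(Y;Z;T)$, drops the interaction term to get $I(Y;T)\ge I(Y;T\mid Z)$, and then writes $I(Y;T\mid Z)=E_{z,t}\,KL[\,p(y\mid z,t)\,\|\,p(y\mid z)\,]>0$ because the output head takes both $Z$ and $T$ as input. You instead argue that $E[Y\mid T=t]$ is non-constant in $t$ via strict monotonicity (\ref{eqm3}). One caution on your side: per Section~\ref{sec1}, the monotonic hypernetwork (\ref{eqm2})--(\ref{eqm3}) is used only for the \emph{observational} training variant; for the RCT-trained model the revenue head is a plain concatenation of the $T$-embedding with $Z$, with no monotonicity enforced. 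Since the theorem is stated for the RCT case, you cannot invoke (\ref{eqm3}) as written---replace it by the weaker premise that the trained head is non-constant in $t$, which is all your conditional-mean argument actually needs and is precisely what the paper's conditional-KL step assumes. With that fix your route is valid and in fact sidesteps a subtlety in the paper's proof: since $I(Z;T)=0$ one has $I(Y;Z;T)=I(Z;T)-I(Z;T\mid Y)=-I(Z;T\mid Y)\le 0$, so the paper's lower bound $I(Y;T)\ge I(Y;T\mid Z)$ is not automatic, whereas arguing directly at the level of $p(y\mid t)$ as you do avoids that step entirely.
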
 

\begin{proof}
Suppose that $I(A;B)$ represents the mutual information between variables $A, B$, and $I(A; B|C)$ denotes the conditional mutual information of a given $C$. According to Chain Rule in Information Theory, the mutual information $I(T;Z)$ is defined by:
\begin{equation}
\begin{aligned}
        I(T;Z) &= I(Z; T|X) + I(Z;X;T) \\
        &\leq I(Z; T|X) + I(X;T)
\end{aligned}
\end{equation}
In RCT data, $X$ is independent to $T$, therefore $I(Z; T|X) \rightarrow 0$ and $I(X;T) \rightarrow 0$. Then, we have $I(T;Z) \leq 0$. Since the mutual information is always non-negative, $I(T;Z) = 0$ can be concluded. 

The mutual information $I(Y;T)$ is defined by:
\begin{equation}
\begin{aligned}
        I(Y;T) &= I(Y;T|Z) + I(Y;Z;T) \\
        &\geq I(Y;T|Z)
\end{aligned}
\end{equation}
As the output $y$ relies on $z,t$, we have:
\begin{equation}
    I(Y;T|Z) = E_{z,t} \{KL[p(y|z,t) || p(y|z)]\} > 0
\end{equation}
Therefore $I(Z;T)$ and $I(Y;T) > 0$ under the RCT data, which concludes the proof.
\end{proof}

\section{Experimental Details of Compared Algorithms}

The hyperparameters of the algorithms are reported in \autoref{appt1}. For HEU-CF and LANG-CF, we utilize generalized random forest (GRF) to model uplifts and a neural network model as the base model to predict the values of the controlled group. Considering the second stage that solves the allocation strategy, the implementation details of heuristic searching and Lagrangian duality are reported in \autoref{appalg1} and \autoref{appalg2}, respectively. For DFL-MER and DFL-PL, the allocation optimization follows the Lagrangian duality, and the DFL algorithm is indicated by \autoref{appalg3}.

\begin{table}[h]
	\caption{Hyper-parameters of HRC, DFL-PL, DFL-MER, HEU-CF, HEU-Slearner, LANG-CF, LANG-Slearner}
	\centering
	\begin{tabular}{cccccccc}
		\toprule
		Hyper-parameters & HRC   & DFL-PL & DFL-MER & HEU-CF & HEU-Slearner  & LANG-CF & LANG-Slearner \\
		\midrule
		Hidden units & 512 $\times$ 256 & 512 $\times$ 256 & 512 $\times$ 256 & 512 $\times$ 256 &  512 $\times$ 256 & 512 $\times$ 256 & 512 $\times$ 256 \\
		Learning rates & 6e-5 & 6e-5 & 6e-5 & 6e-5(base) & 6e-5  & 6e-5(base) & 6e-5 \\
            Training Epoch & 200 & 200 & 200 & 200 & 200& 200& 200 \\
            Temperature & / & 0.0 & 0.5 & / &  / & / & /  \\
		Decision factors & / & 0.3 & 0.3 & / &  / & / & / \\
		Batch sizes & 409600 & 409600 & 409600 & 409600 & 409600  & 409600 & 409600 \\
            Tree estimators & / &  /&  /& 40(GRF) & / & 40(GRF) & / \\
            Tree max samples & / & / & / & 0.2(GRF) & / & 0.2(GRF) & / \\
		\bottomrule
	\end{tabular}
	\label{appt1}
\end{table}

\begin{algorithm}[h]
\caption{Heuristic Searching Algorithm}
\label{appalg1}
\KwIn{Individual features $X$, treatments $T$, random data $\mathcal{D}$, proportions $\{p_j\}_{j=1}^{k}$ where $\sum p_j = 1$ and $k$ is determined by business logic}
\KwOut{Intervention allocation for each treatment $T$}

\BlankLine
\textbf{Training Phase:} \\
\For{each $(x, t_i, r_i) \in \mathcal{D}$}{
     $\hat{r}_i = f(x, t_i)$\;
    Update $\theta$ to minimize $\mathcal{L} = \text{MSE}(r_i, \hat{r}_i)$\;
}

\BlankLine
\textbf{Prediction Phase:} \\
\For{each $x$}{
     $\hat{r}_j = f(x, t_j)$ for all $t_j \in T$\;
     $\Delta r_j = \hat{r}_j - \hat{r}_0$\;
}
Sort all individuals by $\Delta r_j$ in descending order\;

\BlankLine
\textbf{Intervention Allocation:} \\
Divide individuals into $k$ groups with sizes $\{p_j \cdot N\}_{j=1}^{k}$\;
\For{each $t_j \in T $}{
    Apply $t_j$ to individuals in group $j$\;
}
\end{algorithm}

\begin{algorithm}[h]
\caption{Lagrangian Duality Algorithm}
\label{appalg2}
\KwIn{Individuals features $X$, treatments $T$, random data $\mathcal{D}$, Lagrangian multiplier $\lambda$}
\KwOut{Optimal treatment allocation for each individual $x$}

\BlankLine
\textbf{Training Phase:} \\
\For{each $(x, t_{\text{true}}, r, c) \in \mathcal{D}$}{
     $\hat{r} = f_r(x) = [\hat{r}_1, \hat{r}_2, \dots, \hat{r}_{|T|}]$\;
     $\hat{c} = f_c(x) = [\hat{c}_1, \hat{c}_2, \dots, \hat{c}_{|T|}]$\;
     $\text{mask}(T) = [m_1, m_2, \dots, m_{|T|}]$, where $m_i = 1$ if $t_i = t_{\text{true}}$, else $m_i = 0$\;
     $\mathcal{L}_r = \text{MSE}(r, \hat{r}) \cdot \text{mask}(T)$\;
     $\mathcal{L}_c = \text{MSE}(c, \hat{c}) \cdot \text{mask}(T)$\;
    Update model parameters $\theta_r$ and $\theta_c$ to minimize $\mathcal{L}_r + \mathcal{L}_c$\;
}

\BlankLine
\textbf{Prediction Phase:} \\
\For{each $x$}{
     $\hat{r} = f_r(x) = [\hat{r}_1, \hat{r}_2, \dots, \hat{r}_{|T|}]$\;
     $\hat{c} = f_c(x) = [\hat{c}_1, \hat{c}_2, \dots, \hat{c}_{|T|}]$\;
     $\hat{u} = \hat{r} - \lambda \cdot \hat{c}$\;
    Select $t^* = \arg\max_{t \in T} \hat{u}$\;
}

\BlankLine
\textbf{Output:} \\
Assign treatment $t^*$ to user $x$\;
\end{algorithm}

\begin{algorithm}[h]
\caption{DFL Algorithm}
\label{appalg3}
\KwIn{Individuals features $X$, treatments $T$, random data $\mathcal{D}$, Lagrangian multipliers $\lambda_{\text{list}}$, temperature $\tau$, weights $\theta_d$, $\theta_r$, $\theta_c$}
\KwOut{Optimal treatment allocation for each individual $x$}

\BlankLine
\textbf{Training Phase:} \\
Initialize $\mathcal{L}_d = 0$;\\
\For{each $(x, t_{\text{true}}, r, c) \in \mathcal{D}$}{
     $\hat{r} = f_r(x) = [\hat{r}_1, \hat{r}_2, \dots, \hat{r}_{|T|}]$\;
     $\hat{c} = f_c(x) = [\hat{c}_1, \hat{c}_2, \dots, \hat{c}_{|T|}]$\;
     $\text{mask}(T) = [m_1, m_2, \dots, m_{|T|}]$, where $m_i = 1$ if $t_i = t_{\text{true}}$, else $m_i = 0$\;

    \For{each $\lambda_i \in \lambda_{\text{list}}$}{
         $\hat{u}_{\lambda} = \hat{r} - \lambda_i \cdot \hat{c} $\;
         $q_{\lambda} = \frac{\exp(\hat{u}_{\lambda} / \tau)}{\sum \exp(\hat{u}_{\lambda} / \tau)}$\;
         $u_{\lambda} = r - \lambda_i \cdot c$\;
        Update $\mathcal{L}_d\quad += \text{mean}(q_{\lambda} \cdot \text{mask}(T) \cdot u_{\lambda}) \cdot |T|^2$\;
    }

     $\mathcal{L}_r = \text{MSE}(r, \hat{r}) \cdot \text{mask}(T)) \cdot |T|$\;
     $\mathcal{L}_c = \text{MSE}(c, \hat{c}) \cdot \text{mask}(T)) \cdot |T|$\;
     $\mathcal{L} = -\theta_d \cdot \mathcal{L}_d + \theta_r \cdot \mathcal{L}_r + \theta_c \cdot \mathcal{L}_c$\;
    Update model parameters to minimize $\mathcal{L}$\;
}

\BlankLine
\textbf{Prediction Phase:} \\
\For{each $x$}{
     $\hat{r} = f_r(x)$ and $\hat{c} = f_c(x)$ for all $T$\;
     $\hat{u} = \hat{r} - \lambda \cdot \hat{c}$\;
     $t^* = \arg\max(\hat{u})$\;
}

\BlankLine
\textbf{Output:} \\
Assign treatment $t^*$ to an individual $x$\;
\end{algorithm}

\end{document}